\theoremstyle{plain}
\newtheorem{theorem}{Theorem}[section]
\newtheorem{lemma}[theorem]{Lemma}
\newtheorem{corollary}[theorem]{Corollary}
\theoremstyle{definition}
\theoremstyle{remark}
\def\R{\mathbb{R}}
\def\dotprod#1#2{\langle#1,#2\rangle}
\def\vec#1{\mathbf{#1}}
\title{MeMo: Towards Language Models with Associative Memory Mechanisms
}
\author{
\textbf{
Fabio Massimo Zanzotto$^{1,3}$
Elena Sofia Ruzzetti$^1$
Giancarlo A. Xompero$^{3,1}$
}\\
\textbf{
Leonardo Ranaldi$^{2,1}$ 
Davide Venditti$^1$
Federico Ranaldi$^1$
}\\
\textbf{Cristina Giannone$^3$ 
Andrea Favalli$^3$ 
Raniero Romagnoli$^3$} \\
$^1$Human-centric ART, University of Rome Tor Vergata, Italy \\
$^2$University of Edinburgh, Scotland, UK \\
$^3$Almawave S.p.A., Via di Casal Boccone, 188-190 00137, Rome, IT
\\
\small{\href{mailto:fabio.massimo.zanzotto@uniroma2.it}{\color{black} \tt fabio.massimo.zanzotto@uniroma2.it}}
}
\begin{document}

\maketitle

\begin{abstract}
Memorization is a fundamental ability of transformer-based Large Language Models, achieved through learning. In this paper, we propose a paradigm shift by designing an architecture to memorize text directly, bearing in mind the principle that memorization precedes learning. We introduce MeMo, a novel architecture for language modeling that explicitly memorizes sequences of tokens in layered associative memories. By design, MeMo offers transparency and the possibility of model editing, including forgetting texts. We experimented with the MeMo architecture, showing the memorization power of the one-layer and the multi-layer configurations. 

\end{abstract}
\section{Introduction}
Transformer-based Large Language Models achieve unrivaled performance in language modeling by learning to capture and represent complex sequential dependencies from statistical patterns through extensive training phases that iteratively refine their weights to best approximate natural language. This has triggered significant interest in gaining a better understanding of the inner workings of these models, focusing on how these models generalize and capture structure between similar samples in terms of syntactic dependencies \cite{vig-belinkov-2019-analyzing}, compositional relations \cite{hupkes2020compositionalitydecomposedneuralnetworks,zanzotto-etal-2015-squibs} concerning the quantity \cite{Reizinger2024UnderstandingLR} and quality \cite{yang-etal-2024-unveiling} of training data. 

Besides generalization, a key component of Transformers' success is the ability to memorize data while learning \cite{ranaldi-etal-2023-precog,ranaldi-etal-2023-dark}. Indeed, earlier work investigated this other side of learning.
While \citet{carlini2023quantifyingmemorizationneurallanguage,mahdavi2024memorization} demonstrated evidence of memorization, \citet{kharitonov2021bpeaffectsmemorizationtransformers,mahdavi2024memorization} studied how the internal components lead to memorization, and \citet{kim2023provable} estimated the boundary between generalization and memorization, providing an estimation on their storage capacity. Memorization is not inherently a drawback in language models because it plays a crucial role in handling factual knowledge, which is important for question answering, summarization, or information retrieval. This factual recall relies on a delicate balance. While generalization helps capture patterns and unseen relationships in data,
memorization ensures that models retain critical and exact information when required.

Recent research has highlighted that memorization capability can be effectively harnessed using concepts rooted in associative memories \cite{kohonenCorrelationMatrixMemories1972a,andersonSimpleNeuralNetwork1972} - a system designed to link inputs to specific outputs and offers a structured and transparent way to store and retrieve information. 
By leveraging associative memory mechanisms, researchers have proposed strategies to post-edit LLMs \cite{meng2022ROME,meng2023memit}, enabling control over what is memorized, how it is stored, and how it is accessed, enhancing their reliability in fact-based tasks.

In this paper, we propose a paradigm shift by designing Language Models based on a different principle: memorization proceeds learning. By using associative memories, we build MeMo, a novel architecture for language modeling that explicitly memorizes sequences of tokens in layered associative memories. 
MeMo leverages correlation matrix memories \cite{kohonenCorrelationMatrixMemories1972a,andersonSimpleNeuralNetwork1972}, the concept that tokens and sequences of tokens can be represented as random vectors \cite{Plate1995,sahlgren05}, and the Johnson-Lindestrauss Transform to embed larger vectors in smaller spaces by preserving their distances \cite{JLL}. By design, MeMo offers transparency and the possibility of model editing, including forgetting texts. We experimented MeMo, showing the memorization power of single and multi-layer architecture.

\section{Preliminaries and Background}

\paragraph{Representing words or tokens in small random vectors} 
is the first important step in building language models with neural network architectures. Using random vectors is a standard technique. Indeed, random vectors are used in random indexing \cite{sahlgren05} in information retrieval to reduce the document vector space and in distributed representations for neural networks as a convenient way to determine a set of vectors to represent sets of different tokens \cite{Plate1995} or structures \cite{DBLP:conf/icml/ZanzottoD12,DBLP:conf/ijcnn/ZanzottoF17,zanzotto-etal-2020-kermit}. 
Moreover, random vectors are used to initialize weight matrices in any language-oriented application in neural networks, including the initialization of transformers \cite{10.5555/3295222.3295349} to build large language models from scratch.

Multivariate Gaussian random vectors have the important property of being able to generate sets $E$ of nearly orthogonal unitary vectors  that can form an approximate base of the space $R^n$ in a smaller space $R^d$ \cite{JLL}. Each token $t$ is then represented with a distinct vector in $\vec{t} \in E$, and the two following properties hold with a probability larger than $1 - \delta$:
$$
\begin{array}{cc}
\|\vec{a}^T\vec{b}\| < \epsilon   & \text{if } a \neq b \\
1 - \epsilon < \vec{a}^T\vec{b} < 1 + \epsilon       & \text{if } a = b 
\end{array}
$$
where $a$ and $b$ are tokens and $\vec{a}$ and $\vec{b}$ are vectors representing those tokens in the reduced space $R^d$.
By using the Johnson-Lindestrauss Lemma \cite{JLL}, it is possible to find a lower bound of how large $d$ should be in order to host $n$ vectors given the approximation $\epsilon$ and the probability factor $\delta$ (see Appendix A). In less precise equations, the two properties can be rewritten as:
$$
\vec{a}^T\vec{b} \approx \left\{\begin{array}{cc}
0   & \text{if } a \neq b \\
1   & \text{if } a = b 
\end{array}\right.
$$

Using these vectors with their properties, it is possible to represent a bag-of-tokens $B$ in a single vector $\vec{t_B}$ offering the operation that approximately counts the number of times a token is in $B$. The vector $\vec{t_B}$ is obtained by summing up vectors representing tokens in $B$ and, then, the counting operation is:
$$
\vec{a}^T\vec{t_B} \approx k  
$$
where $k$ is the number of times $a$ belongs to the bag $B$.

\paragraph{Correlation matrix memories (CMMs)} 
\cite{kohonenCorrelationMatrixMemories1972a,andersonSimpleNeuralNetwork1972} are a powerful tool to store key-value $(k_i,v_i)$ pairs in distributed memories as the sum of outer products of the vectors representing the keys $k_i$ and vectors representing the values $v_i$:  
\begin{equation}
C = \sum_{i=1}^n \vec{k}_i \vec{v}_i^T
\label{eq:rev}
\end{equation}
These CMMs have been generally defined on one-hot representations \cite{hobsonCorrelationMatrixMemories2011} and, eventually, reduced afterwards \cite{kohonenCorrelationMatrixMemories1972a}.  
Then, to retrieve the value associated with a key, the matrix $C$ should be multiplied with $\vec{k}_j^T$. As vectors $\vec{k}_i$ are one-hot vectors, the following property holds:
$$
\vec{k}_j^T C = \vec{v}_j
$$
To optimize the construction of these CMM matrices, we use the correlated form:
$$
C = K V^T = \left [ \begin{array}{cccc}
| & |& &| \\
\vec{k}_1&\vec{k}_2& \ldots &\vec{k}_n\\
| & |& &| \\
\end{array}
\right ]
\left [ \begin{array}{cccc}
  - & \vec{v}^T_1 & -  \\
  - & \vec{v}^T_2 & -  \\
 & \vdots &  \\

  - & \vec{v}^T_n & -  \\
\end{array}
\right ]
$$

To make CMMs practical, in MeMo, we use these memories along with the multivariate Gaussian vectors to represent keys and values. Hence, the generic property of this associative matrices is $$
\vec{k}_j^T C \approx \vec{e}_j^T V = \vec{v}_j
$$
where $\vec{e}_j$ is the onehot vector of the position $j$ and $\vec{k}_j$ and $\vec{v}_j$ are multivariate Gaussian vectors to represent the key $k_j$ and the value $v_j$.  

The idea behind correlation matrix memories has often been used to explain that feed-forward matrices are where Transformer architectures store most information \cite{mengMassEditingMemoryTransformer2023}. In MeMo, CMMs become the cornerstone for defining a novel approach to building Language Models.

\paragraph{Johnson-Lindestrauss Transform} \cite{JLLsimple_demonstration}, derived by using the Johnson-Lindestrauss Lemma (JLL) \cite{JLL}, guarantees that it exists a linear transformation $T_{d \times n}$ that transforms vectors in a bigger space $R^n$ in vectors in a smaller space $R^d$ by preserving their distance with an approximation $\epsilon$. Then,  given two vectors $\vec{a}$ and $\vec{b}$ in $R^n$,  the following property is guaranteed:
$$
\|\vec{a} -\vec{b}\| - \epsilon < \|T\vec{a} -T\vec{b}\|<\|\vec{a} -\vec{b}\| + \epsilon
$$
The JLL with the demonstration in \cite{JLLsimple_demonstration} shows that it is possible to build this matrix $T$ by using multivariate Gaussian vectors as transformation rows. 

JLT matrices are the last ingredient of our new model, as we need to transpose sequences of tokens in their representations in the target $R^d$ space.

\section{MeMo: Language Models with Multi-layer Correlation Matrix Memories}

Building on Correlation Matrix Memories, on multi-variate Gaussian vectors to represent tokens and token sequences, and on Johnson-Lindestrauss Transforms, we present here MeMo\footnote{MeMo is on \href{https://github.com/humancentricart/MeMo}{GitHub - HumanCentricART - MeMo}. MeMo is distributed under the license CC BY-NC-SA 4.0}, a way to build language models that memorize texts in a clear, transparent way. We first present how to build a language model with a single CMM (Sec. \ref{sec:single_LM-CMM}). This single-layer CMM language model predicts the next tokens of sequences with a fixed length $h$. Then, we generalize MeMo to a multi-layer approach in order to increase the length of the sequences that can be memorized, retrieved, and forgotten~(Sec.~\ref{sec:multi-layerCMM}).

\subsection{Language Models with single Correlation Matrix Memories}
\label{sec:single_LM-CMM}
Correlation matrix memories (CMMs) and multi-variate Gaussian vectors with their properties offer an interesting opportunity to build simple language models. 

Language models can be seen as predictors of the next tokens given input sequences. From a symbolic perspective, a language model stores the associations between sequences and the next tokens along with the observed frequency in order to estimate the probability. Then, from a symbolic perspective, the base for a language model is a multi-set $LM$ containing:
$$
LM = \{([x_1,x_2,...,x_h],y)\} = \{(s,y)\} 
$$
where $s = [x_1,x_2,...,x_h]$ are the fixed length sequences of tokens and $y$ are the next tokens implied by sequences $s$. Tokens are contained in a fixed vocabulary $V$ of $n$ tokens. These multisets are the sample sets where probabilities are estimated by counting.

\begin{figure}
  \includegraphics[width=\linewidth]{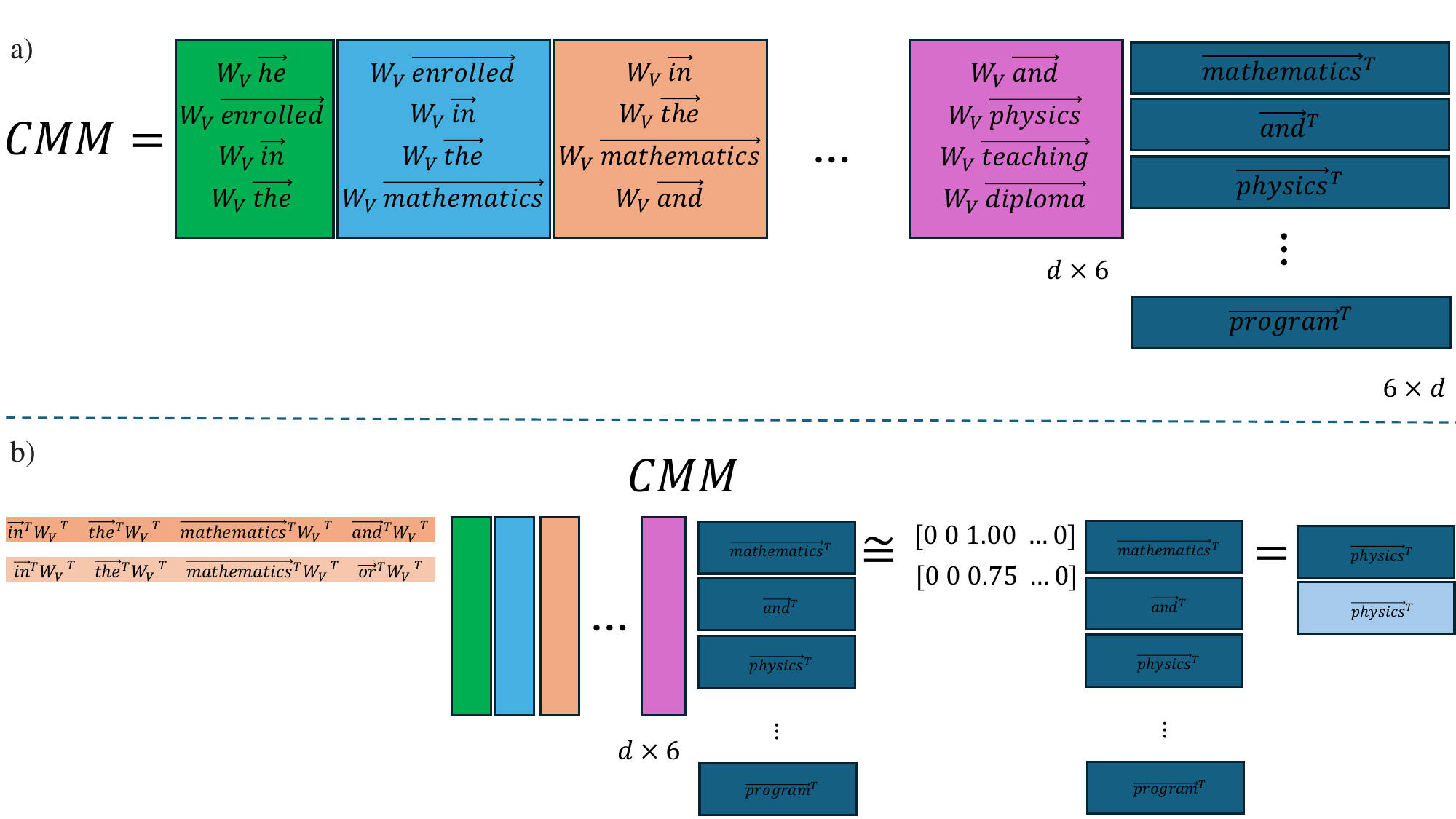}
  \caption{A sample Language Model (LM) with a single Correlation Matrix Memory (CMM) coding a single sentence. a) Memorization phase: the CMM is a $d \times d$ matrix coding the pairs (sequence, next\_token) for a sentence; b) Retrieving phase: a sample use of the CMM in (a) where the CMM emits the vector of the word \emph{physics} given the encoding of the sequence \emph{in the mathematics and}.}
  \label{fig:CMM-LM-single}
\end{figure}

The translation of these multi-sets $LM$ in a CMM is straightforward: input sequences $s$ are keys, and output next tokens $y$ are values. We then use multivariate Gaussian vectors stored in the matrix $E_{n \times d}$ to encode the $n$ tokens in $V$ and a Johnson-Lindestrauss Transform $W_V$ to ensure that both input sequences and output vectors are in the same space $R^d$. Then, the CMM encoding  an LM has the following equation:
\begin{equation}
C = \sum_{(s,y) \in LM} \vec{s} \vec{y}^T = \sum_{(s,y) \in LM} \left[\begin{array}{c}W_V\vec{x}_1\\W_V\vec{x}_2\\\vdots\\W_V\vec{x}_h\end{array}\right]\vec{y}^T
\label{eq:sequence_el_dot_prod}
\end{equation}
where $\vec{s} \in R^d$ is the vector representing the sequence $s$ composed as described using vectors $\vec{x}_i \in R^d$ encoding tokens $x_i$ and the JLT matrix $W_V$ of dimensions $d/h \times d$. The vector $\vec{y} \in R^d$ represents the symbol $y$. Vectors $\vec{x}_i$ and $\vec{y}$ are columns of the embedding matrix $E$. 
The properties of the embedding vectors and the JLT, along with how the JLT is built, can guarantee that:
$$
(W_V\vec{x}_j)^T W_V\vec{x}_i  \approx \left\{\begin{array}{ll} 1/h & \text{ if } x_i = x_j \\0 & \text{ if } x_i \neq x_j  \end{array} \right.
$$

Once the LM is transferred to the CMM, the matrix $C$ can be used to predict the next token of a given sequence $\hat{s} = [\hat{x}_1,\hat{x}_2,...,\hat{x}_h]$. The next token can be derived as follows. The first step is the product: 
\begin{equation}
\vec{\hat{y}} = \vec{\hat{s}}^T C = \sum_{(s_j,y_j) \in LM} (\vec{s}^T\vec{s_j}) \vec{y_j}
\label{eq:dec_singleCMM}
\end{equation}
where  $\vec{s}^T = [\vec{\hat{x}}_1^T W_V^T, \vec{\hat{x}}_2^T W_V^T,...,\vec{\hat{x}}_h^T W_V^T]$ is the representation in a space $R^d$ of the sequence $s$. The above properties (see eq.~\ref{eq:sequence_el_dot_prod}) guarantee that: 
$$
\vec{s}\vec{s}_i^T \approx k/h
$$
where $k$ is the number of common tokens between the sequences $s$ and $s_j$. Indeed, the CMM transformation of the LM also offers an initial property of generalization. The models can also give an estimation of the count for sequences that are not stored completely. 
Therefore, the following product estimates the counts of an output token $t_i$ given the sequence $\hat{s}$:
$$
\vec{t} = E \vec{\hat{y}} 
$$
Hence, focusing on the $i$-th component of the vector $\vec{t}$, it will be the approximate count of full and partial sequences generating the i-th token, that is:
$$
(\vec{t})_i \approx \sum_{\{(s_j,y_j) \in LM | y_j=t_i \}} \vec{s}^T\vec{s_j} 
$$
The token $t_i$ to emit for a sequence $\hat{s}$ is then chosen by selecting the index $i$ of the component of the vector $E \vec{\hat{s}}^T C$ with the highest value as in this equation:
\begin{equation}
    i = argmax_i ( E \vec{\hat{s}}^T C )_i
\label{eq:argmaxCMM}
\end{equation}

To better describe how a simple correlation matrix memory (CMM) can be used as a language model (LM), we show how to build an LM with a window of 4 tokens using the following sentence as a running example:
\begin{center}
\begin{tabular}{p{7cm}}
\emph{He enrolled in the mathematics and physics teaching diploma program}\\
\end{tabular}
\end{center}
Then, the CMM should contain the set $LM$ of pairs: 
\begin{center}
\begin{tabular}{p{7cm}}
$LM$ = \textit{\{([He enrolled in the],mathematics), ([enrolled in the mathematics], and), ([in the mathematics and], physics), ..., ([and physics teaching diploma],program)\}}\\
\end{tabular}
\end{center}
Hence, given a $d$-dimensional word embedding space where vectors $\vec{w}$ for each word $w$ are drawn from a Gaussian multinomial pseudo-random generator and $W_V$ is a Johson-Lindestrauss Transform ${d \times d/4}$ matrix embedding word vectors in a smaller space $R^{d/4}$, the CMM $d \times d$ matrix will contain the sum of the matrices representing the pairs in $P$ (see Fig. \ref{fig:CMM-LM-single}.a) built as the sum of outer products of key columns representing sequences and row value vectors representing next tokens. For example, the first green column represents the sequence \emph{He enrolled in the} and it is linked with the first row representing \textit{mathematics} (see Fig. \ref{fig:CMM-LM-single}.a). 

In the retrieving phase, to obtain the next token given a sequence of 4 tokens, the transposed vector representing the sequence is multiplied by the CMM. The result is the vector representing the next token. For example, given the sequence \emph{in the mathematics and}, the green transposed vector representing the sequence is multiplied by the CMM representing encoded associations (see Fig. \ref{fig:CMM-LM-single}.b). The multiplication of this vector with the first block implied by the CMM produces a vector that approximates $[\begin{array}{cccccc} 0&0&1.00&0&0&0\end{array}]$. This vector then extracts the third vector of the second block, that is, the one associated with \textit{physics}. This model can also be generalized in the sense that it may take into consideration subsequences of a given sequence. Indeed, the sequence \emph{in the mathematics or} will emit the vector for $physics$ with a weight of $0.75$ given the value of the dot product of its vector with the vector of the sequence \emph{in the mathematics and}. This is the first possible generalization of the one-layer language model built with a CMM.

Hence, a single CMM can build language models able to generalize but these language models will operate with fixed small windows depending on the ratio $d/h$, dimension of the space with respect to the number of heads or tokens in the window. If $d/h$ is small, vectors in this smaller space will be not enough different to discriminate different tokens.  

\subsection{Multi-layer Correlation Matrix Memories}
\label{sec:multi-layerCMM}

\begin{figure*}
  \includegraphics[width=\textwidth]{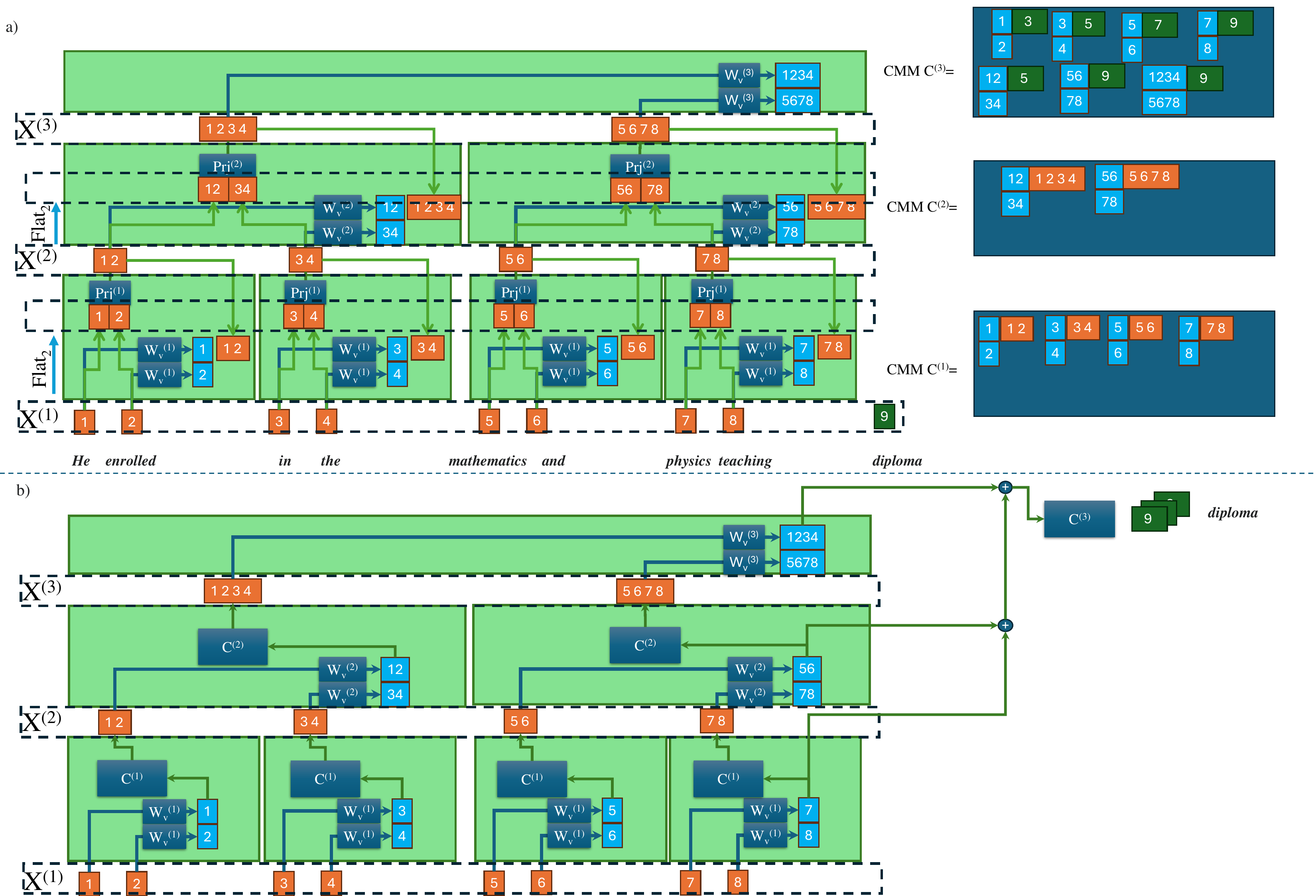}
  \caption{A sample Language Model (LM) with a Multi-layer Correlation Matrix Memory (CMM) coding a sequence of numbers with number of heads h=2 and number of layers l=3.}
  \label{fig:stack}
\end{figure*}

To increase the maximum length of the input window of language models, in line with what is done in transformers \cite{10.5555/3295222.3295349}, we stack layers containing correlation matrix memories (see Fig. \ref{fig:stack} for an example). 

The driving idea is that CMMs of a generic MeMo layer store the encoding of sequences whose length is determined by the level of the layer. Hence, the generic MeMo layer contains key-value pairs where the key is the representation of the sequence elements, and the value is a vector representing the sequence as a whole. The representation of the sequence elements is done similarly to what is done for an LM based on a single CMM (as in Sec. \ref{sec:single_LM-CMM}).  The last MeMo layer instead stores the relation between sequences of increasing length and the next token, and, thus, it is the layer devoted to the next token prediction.

To define MeMo, we need first to fix the notation: $h$ is the number of heads or, also, the maximum number of input elements that are treated by the MeMo layer, $l$ is the number of layers, $d$ is the dimension of the encoding vectors, and $X^{(i)}$ is the input for the $i$-th layer containing vectors representing sequences in row vectors $\vec{x}^{(i)T}_j$. 
Given these parameters, MeMo can encode sequences of a maximum length of $m=h^l$. 

\paragraph{Memorization} Each MeMo layer $MM^{(i)}$ \textit{memorizes} sequences up to the length $h^i$ and produces the next token emission matrices for sequences up to $h^i$ length to be stored in the last layer. The equations for the memorization phase are the following:
$$
MM_{m}^{(i)}
%= 
\left\{
\begin{array}{lcl}
X^{(i+1)} & = & Flat_h(X^{(i)}) Prj^{(i)}\\
I^{(i)} & = & Flat_h(X^{(i)}W_V^{(i)T})\\
C'^{(i)} & = & C^{(i)} + I^{(i)T} \Phi^{(i)} X^{(i+1)}\\
C'^{(last)} & = & C^{(last)} + I^{(i)T} Sel_h(X^{(1)})
\end{array}
\right.
$$
where $Flat_h(X^{(i)})$ is a function that takes a $k \times d$ matrix  and reshapes it in a $k/h \times d \cdot h$ matrix, $Sel_h(X^{(0)})$ is a function that selects every h vector from the input matrix $X^{(0)}$, $Prj^{(i)}$ is a $h\cdot d \times d$ projection matrix that encodes sequences of $h$ vectors in the internal $d$ dimensional space, and $W_h^{(i)}$ is an embedding matrix reducing vectors in $R^d$ to vectors in $R^{d/h}$. 

We proceed by reading the equations from the top to the bottom. 

Each $h$ vectors in the input $X^{(i)}$ are juxtaposed to create sequences of input that are treated by each block of the $i$-th layer and, thus, these sequences of inputs are encoded as in vectors $X^{(i+1)}$ of dimension $d$ that are unique for each encoded sequence. 

Sequences are also represented by vectors $I^{(i)}$ by first embedding vectors $X^{(i)}$ in sequences $X^{(i)}W_V^{(i)T}$ of row vectors in $d/h$ and, then, packing these vectors in single row vectors $Flat_h(X^{(i)}W_V^{(i)T})$ representing sequences. These $I^{(i)}$ are the keys of sequences, and $X^{(i+1)}$ are the values in which these keys are translated in the retrieving phase. 

Then, $I^{(i)}$ are intended to represent sequences as sequences of elements $\vec{x}_j^{(i)T} W_V^{(i)T}$. Instead, $X^{(i+1)}$ represents the same sequences as a whole. This difference is small but important as $I^{(i)}$ are intended to be also partially matched. 

The pairs  (sequences of elements, coding of sequence), respectively in $I^{(i)}$ and $X^{(i+1)}$, are then stored in the CMM $C^{(i)}$ of the current level $i$ adding $I^{(i)T} \Phi^{(i)} X^{(i+1)}$ to the current matrix.
The diagonal matrix $\Phi^{(i)}$ contains penalizing factors to force only one memorization of the pair (sequences of elements, coding of sequence) in the corresponding matrix $C^{(i)}$. 
The pair (sequences of elements, coding of sequence) should be stored if it is not stored in the current matrix $C^{(i)}$, and if it appears $f$ times in the current updated, it should be stored only once. Therefore, the penalizing matrix $\Phi^{(i)}$ is the product of two diagonal matrices:
$$
\Phi^{(i)} = D^{(i)} F^{(i)}
$$
where: (1) the distiller $D^{(i)}$ is a filter of patterns and has 0 in the diagonal if the corresponding pattern is already stored in $C^{(i)}$ and 1 if it is not stored in $C^{(i)}$; (2) the inverse frequency matrix $F^{(i)}$ is the diagonal of $F^{(i)}$ where elements in the diagonal contains the inverse frequency of the corresponding pattern in the current update $X^{(i+1)}$. The two matrices $D^{(i)}$ and $F^{(i)}$ are obtained with linear and nonlinear operations over the current matrices of the current layer. Given $\overline{x}^{(i+1)}$ as the sum of all the row vectors in $X^{(i+1)}$, the distill matrix is computed as follows:
$$
D^{(i)} = diag(1 - round(I^{(i)} C^{(i)} \overline{x}^{(i+1)}))
$$
where $I^{(i)} C^{(i)}$ produces all sequence vectors already stored in $C^{(i)}$ and, then, the multiplication with the vector $\overline{x}^{(i+1)})$ detects which of these vectors is in the new vectors to store. The frequency matrix is computed similarly:
$$
F^{(i)} = diag(1/round(X^{(i+1)} \overline{x}^{(i+1)}))
$$
by multiplying the same vector $\overline{x}^{(i+1)}$ with all the vectors to be stored. 

Finally, in each layer $i$, the CMM $C'^{(last)}$ of the last layer is updated with the pairs connecting the sequences of elements $I^{(i)T}$ with the correlated next tokens $Sel_h{X^{(1)}}$. The last layer is the real layer that emits the next token of a given sequence. 

We show how the memorization of the simple sequence 1 2 3 4 5 6 7 8 9 representing the sentence of the running example is done in a MeMo with $h=2$ and $l=3$ (see Fig. \ref{fig:stack}.a). This configuration of MeMo allows the storage sequences of up to 8 tokens, emitting the ninth token. In this example, the CMM $C^{(1)}$ of layer $1$ is storing the coding of sequences of two input elements. Embedding vectors of dimension $d$ are represented in orange and embedding vectors of dimension $d/2$ are represented in light blue. Sequences $I^{(i)}$ of elements are the light blue vector pairs 1 2, 3 4, 5 6, and 7 8. These are multiplied with the coding of the sequences represented by the orange vectors 12, 34, 56, and 78. These outer products are stored in CMM $C^{(1)}$. Instead, the outer product of vectors 1 2, 3 4, 5 6, and 7 8 with the vectors 3, 5, 7, and 9 is stored in the matrix CMM $C^{(3)}$. By using embeddings  $X^{(2)}$ of layer 1, layer 2 emits the embeddings of length four and stores them in the matrix $C^{(3)}$. Then it store the pairs ([1 2, 3 4], 5) and ([5 6, 7 8], 9) ih $C^{(3)}$. Layer 3 stores the pair ([1 2 3 4, 5 6 7 8], 9) in $C^{(3)}$ that represents the longest sequence that can be stored given $h$ and $l$.       

\paragraph{Retrieving} 
In this phase, MeMo is used to retrieve what has been stored by giving as input a sequence and expecting the next token as output. All intermediate layers are used to retrieve the encoding of sequences with growing length. These are used on the final layer to retrieve the next token to emit. The retrieving equations for each layer of MeMo are the following:    
$$
MM_{r}^{(i)}
%= 
\left\{
\begin{array}{lcl}
I^{(i)} & = & Flat_h(\hat{X}^{(i)}W_V^{(i)T})\\
\hat{X}^{(i+1)} & = & I^{(i)T}C^{(i)}\\
O'^{(last)} & = & O^{(last)} + I^{(i)T}C^{(last)}
\end{array}
\right.
$$
where $\hat{X}^{(i+1)}$ are the retrieved encoding of the sequences extracted from the CMM $C^{(i)}$ of the current layer by using the encoding of the sequences of elements $I^{(i)}$. Clearly, $\hat{X}^{(1)} = X^{(1)}$, that is, the first layer encodes the sequence as it is, and it is not retrieved from a CMM. Finally, $O^{(last)}$ is storing the output vectors for the next token given the input sequence.

In the running example, the retrieving is done as follows (see Fig. \ref{fig:stack}.b). The sequence 1 2 3 4 5 6 7 8 is used to generate the first sequence of vectors $X^{(1)}$. Each pair is used to generate the encoding of sequences of elements (light blue boxes) by using the matrix $W_v^{(1)}$. Then, these are used to retrieve the encoding of sequences from $C^{(1)}$; the encoding is the light orange boxes. The encoding $E_1$ of the sequence of elements of the last part of the sequence 7 8 is summed up to then retrieve the next token from $C^{(3)}$. The following level works in the same way, emitting the encodings $E_2$ and $E_3$ of the sequences of elements 56 78 for layer 2 and 1234 5678 for layer 3, respectively. The sum $E_1+E_2+E_3$ of three emitted encodings is then used to retrieve the next token by multiplying the resultant vector with the matrix $C^{(3)}$. Then, the result will be the embedding vector of 9 with a weight of 3 since it is encoded three times in the matrix with three different sequences of elements.         

\paragraph{Forgetting}
MeMo, as it is, offers then the important capability of forgetting, that is, erasing stored sequences. The operation is straightforward: subtracting the sequence from the last layer instead of summing. The equation follows:
$$
MM_{f}^{(i)}\\
%= 
\left\{
\begin{array}{lcl}
X^{(i+1)} & = & Flat_h(X^{(i)}) Prj^{(i)}\\
I^{(i)} & = & Flat_h(X^{(i)}W_V^{(i)T})\\
C'^{(last)} & = & C^{(last)} - I^{(i)T} Sel_h(X^{(1)})
\end{array}
\right.
$$

\section{Experimental Investigation}

In this section, we experiment the memorization capacity of MeMo with a single layer and with multiple layers. 

\subsection{Exploring Memorization Capabilities of Single-layer MeMo}

\paragraph{Experimental set-up}
In the first experiment, we investigate the capacity of a single-layer MeMo to memorize the association between sequences of symbols and one output symbol. Hence, we created a generator of random sequences of $h$ symbols $[x_1,x_2,...,x_h]$ that are mapped to a random symbol $y$. To maximize the diversity, symbols are taken with a uniform random distribution from a vocabulary of 100,000 symbols. This guarantees that the mapping between sequences and symbols is unique. Therefore, we are testing the real capacity of memorization of the CMM. In the experiments, we used random vectors $\vec{x_i}$ representing symbols $x_i$ with $d$ dimensions with $d_h \in \{16, 32, 64, 128, 256\}$ and we experimented with sequences of increasing length with $h \in \{2, 4, 8, 16, 32\}$. The output vectors  $\vec{y}$ representing symbols $y$ are instead random vectors with $d$ in $\{512, 1024, 2048, 4096, 8192\}$. Therefore, experimental CMMs are matrices with $(h \times d_h, d)$ dimensions. Thus, the number of parameters of each CMM is $NoP = h \cdot d_h \cdot d$. 

\begin{figure}
  \includegraphics[width=0.5\textwidth]{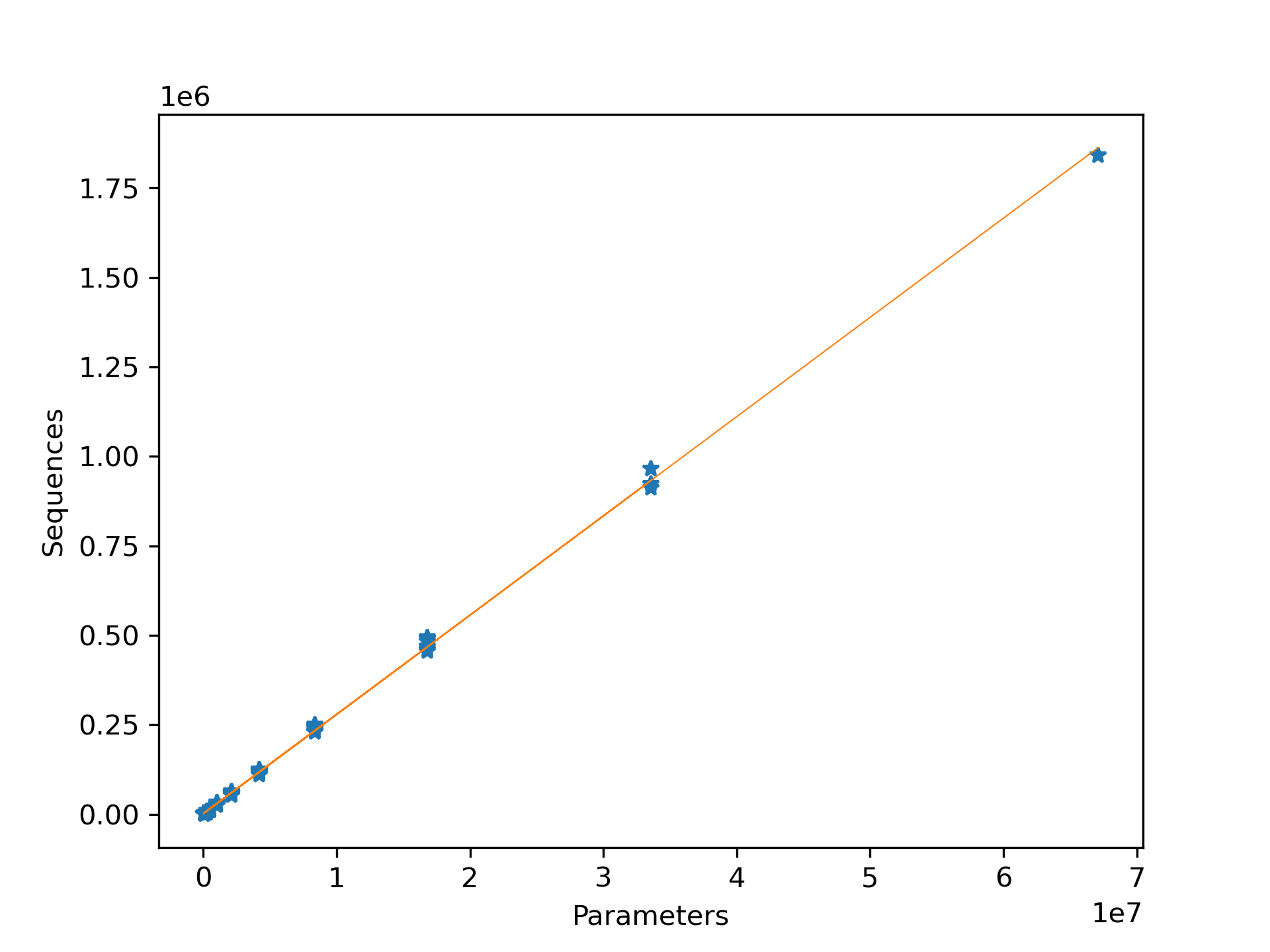}
  \caption{Memorization capacity of a single CMM: parameters $NoP = h \cdot d_h \cdot d$ with respect to the number of sequences that can be stored. Points in the plot are CMMs with different configurations of $h$, $d_h$, and $d$.}
  \vspace{-1em}
  \label{fig:cmm_storingcapacity}
\end{figure}

In this experiment, batches $B_i$ of 1,000 pairs $\{([x_1,x_2,...,x_h],y)\}$ are stored into the CMM matrix $C$ for each step $i$ and, then, the storing capacity is evaluated by computing the accuracy of reproducing the tokens of the batch $B_i$ and the first batch $B_0$. The accuracy $Acc(B_i, C)$ of the CMM $C$ on the batch $B_i$ is computed as the percentage of correct emitted tokens $y$ given sequences $[x_1,x_2,...,x_h]$ with equation \ref{eq:argmaxCMM}. The storing capacity of a CMM matrix C is computed as the number of pairs that can be stored that guarantee an $(Acc(B_0, C) + Acc(B_i, C))/2>0.9$ where $B_0$ is the first batch and $B_i$ is the current batch. 

\paragraph{Results}
Memo, based on a single correlation matrix memory, has the capacity to store sequences according to the total number of parameters of the CMM. Indeed, the memorization capacity of a single CMM does not depend on the number of heads of the input sequence but only on the total number of parameters of the CMM. The plot in Figure \ref{fig:cmm_storingcapacity} reports the results of the first set of experiments and shows that there is a linear relation between the number of parameters and the number of stored sequences. This is in line with the empirical findings on LLMs that originated the linear scaling law linking the number of tokens of the training corpus with respect to the total number of parameters of the Transformer \cite{kaplan2020scalinglawsneurallanguage}.

\begin{figure*}
\begin{tabular}{ccc}
  \includegraphics[width=0.3\textwidth]{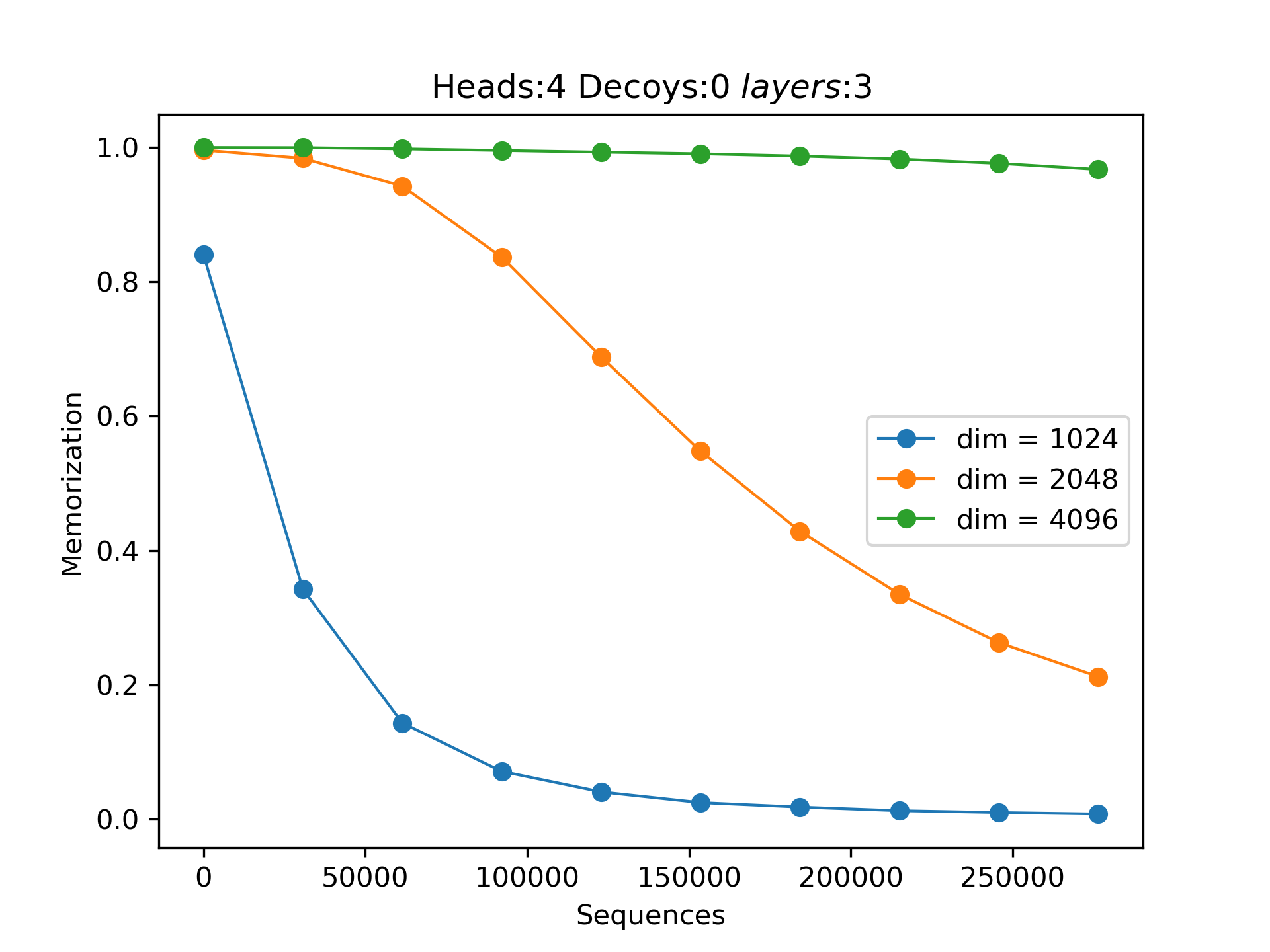}   & 
  \includegraphics[width=0.3\textwidth]{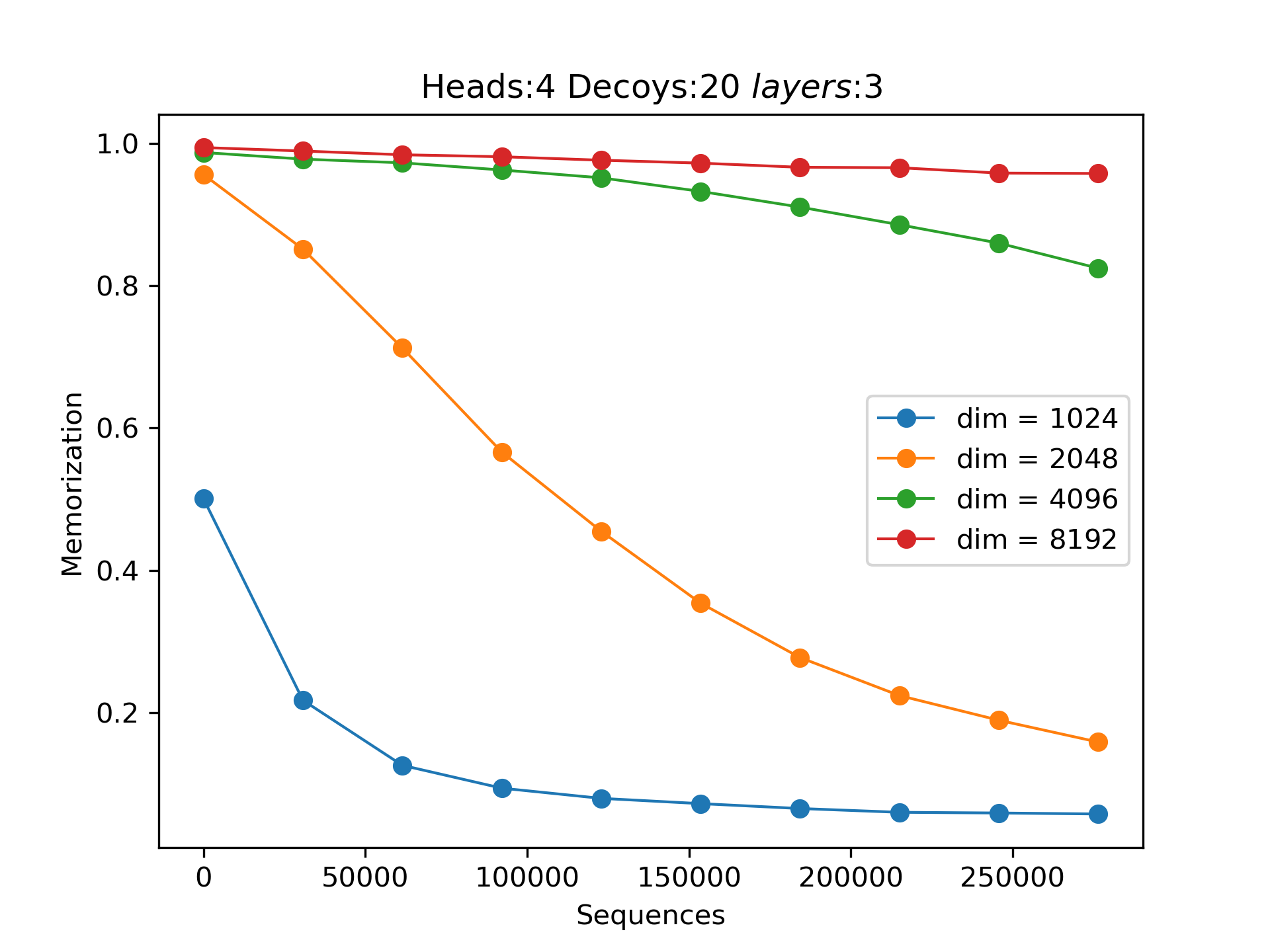}   & 
  \includegraphics[width=0.3\textwidth]{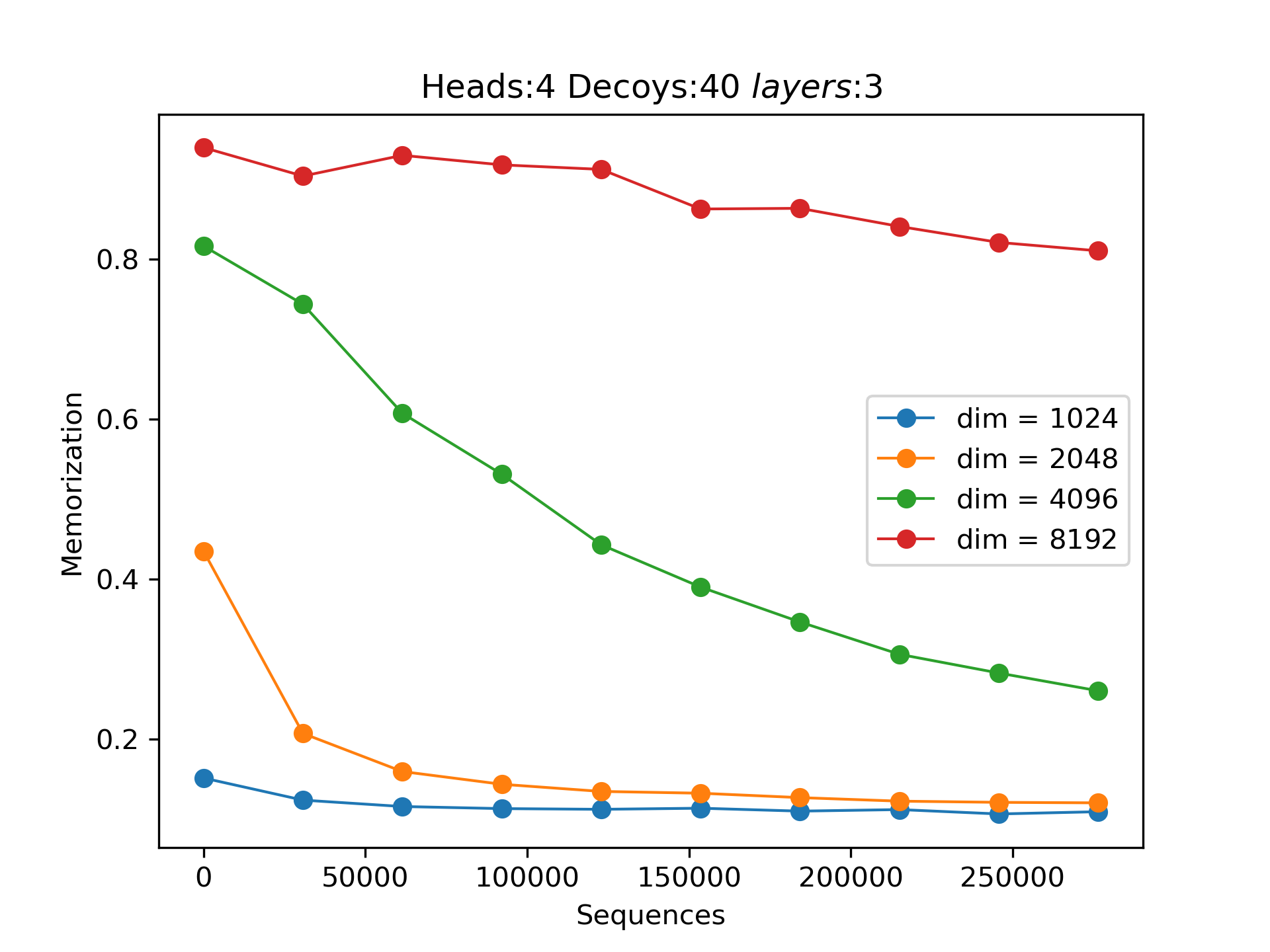}   \\
  \includegraphics[width=0.3\textwidth]{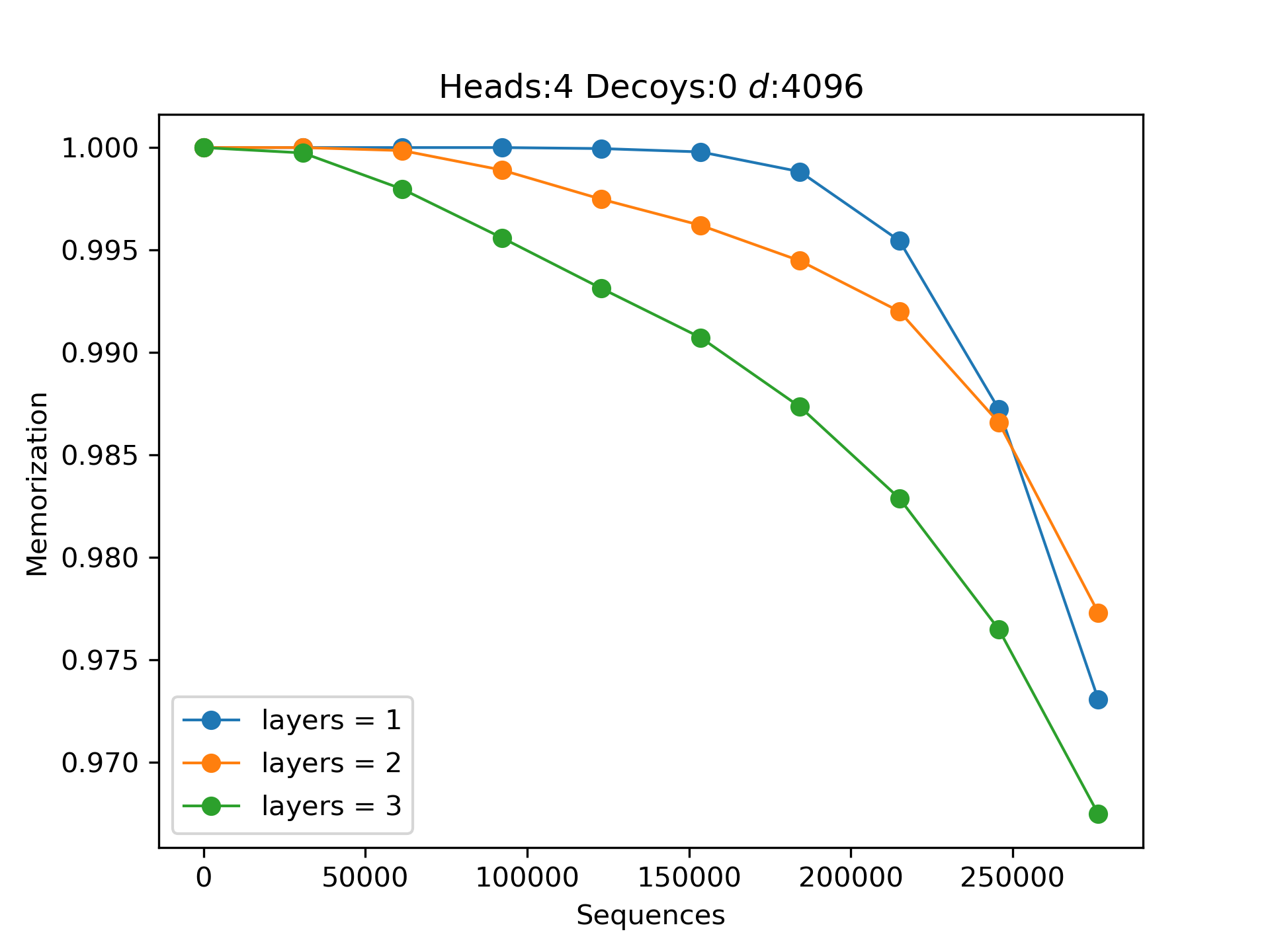}   & 
  \includegraphics[width=0.3\textwidth]{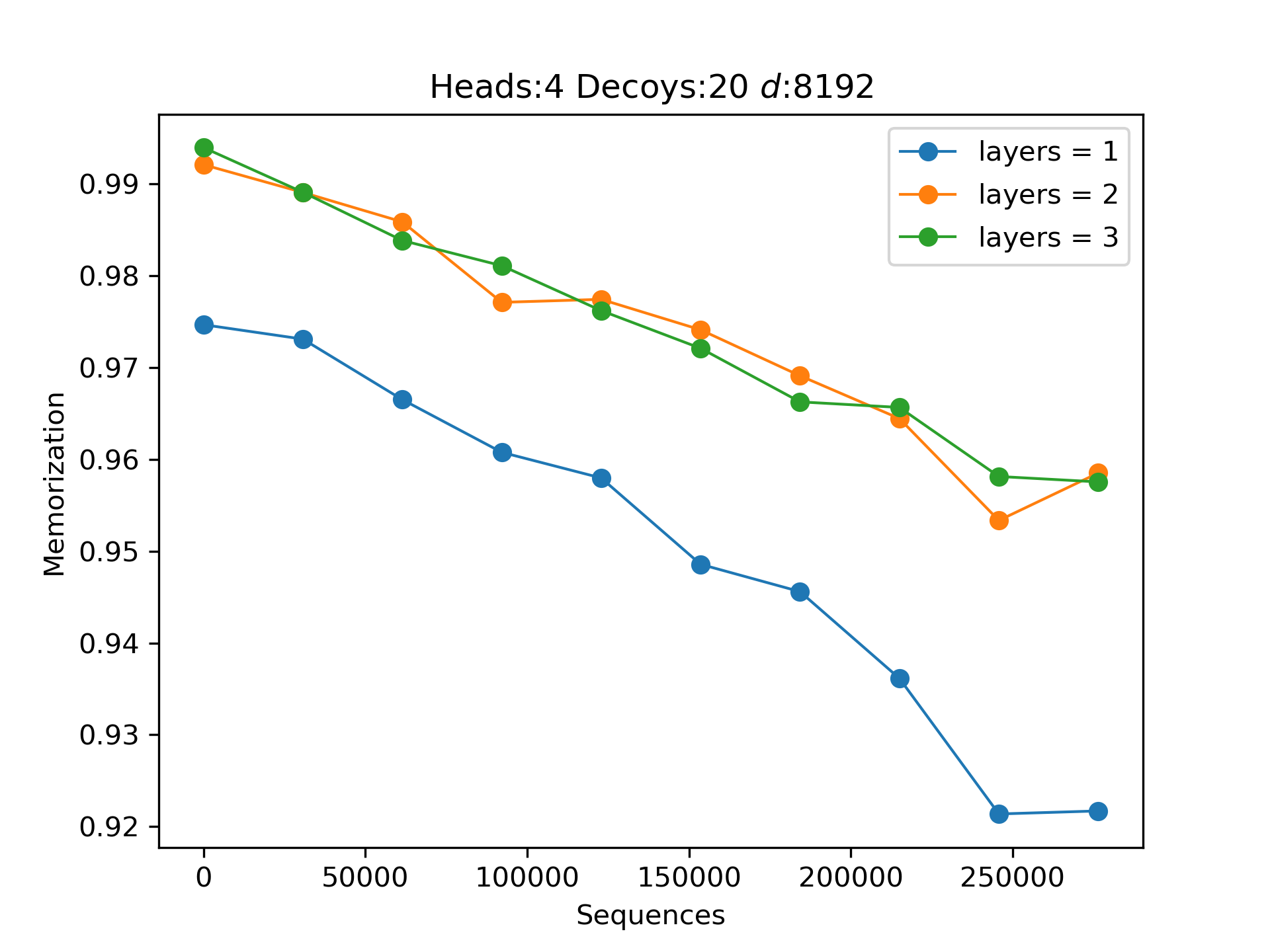}   & 
  \includegraphics[width=0.3\textwidth]{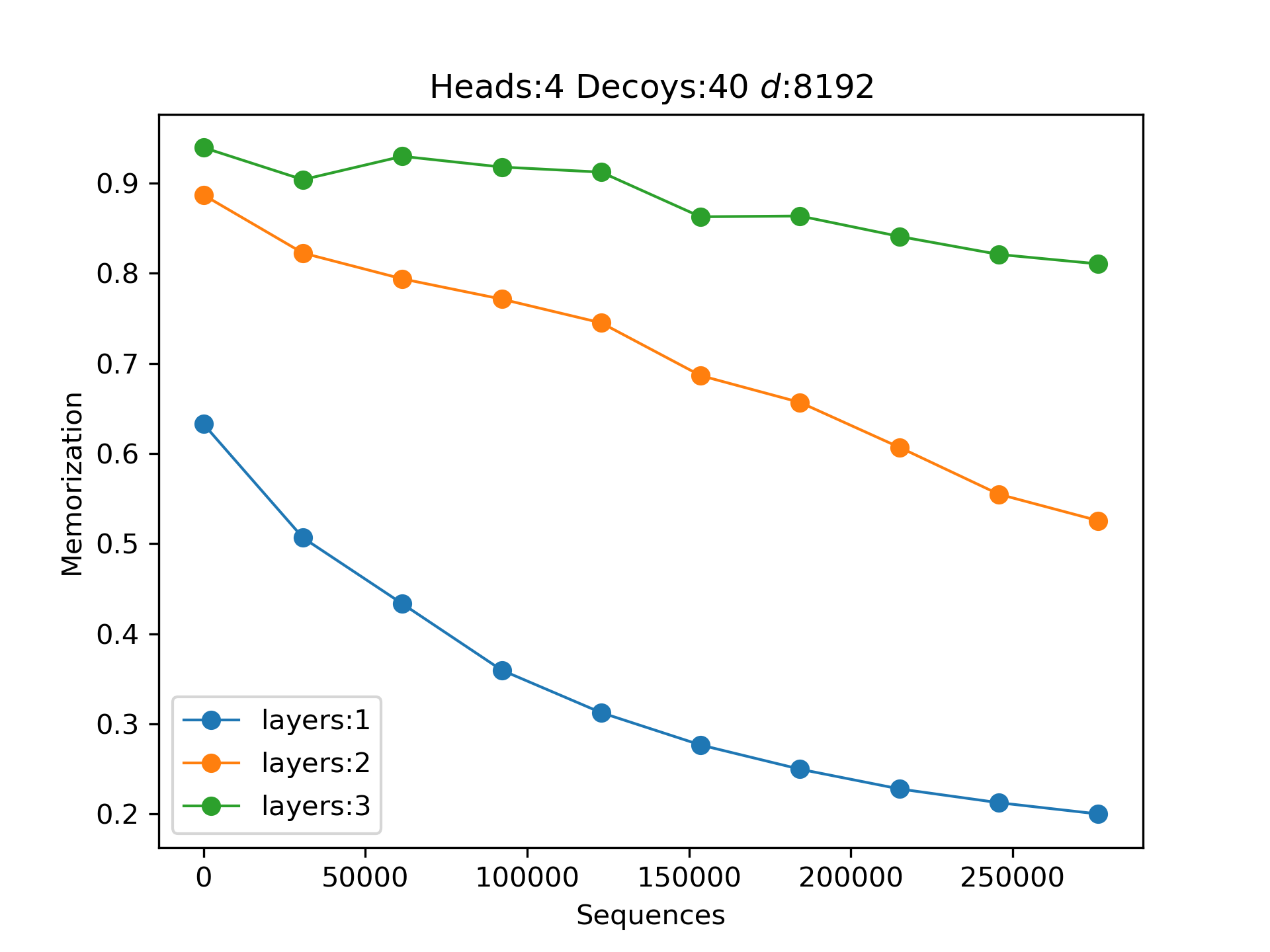}   \\
\end{tabular}
  \vspace{-.5em}
  \caption{Memorization capacity of MeMo: storing ability with respect to the number of stored sequences. Experiments with increasing complexity of the datasets (number of decoys) and increasing number of layers} 
  \vspace{-1em}
  \label{fig:memo_storingcapacity}
\end{figure*}

\subsection{Exploring Memorization Capabilities of Multi-layer MeMo}

\paragraph{Experimental set-up}
In the second experiment, we investigate the capacity of MeMo to memorize complete texts. As we aim to investigate only the memorization capacity, we used randomly generated texts of a given chunk length. To really test the capacity of splitting the ability to store long sequences with a layered model, we produced a text generator that simulates the existence of repeated words long $h$ tokens in the text. These repeated words decoy a memorizer with only $h$ heads because 
the same $decoy$ of $h$ tokens should produce different next tokens according to the tokens preceding the $decoy$, which may be captured only if MeMo with more layers is memorizing sequences longer than $h$. We experimented with $h=4$, with up to $3$ layers, with $d \in \{1024, 2048, 4096, 8192\}$, and with three setting of decoys: $0$, $20$, and $40$. 

\paragraph{Results}

The memorization capability of MeMo increases with the inner dimension $d$ that is correlated with the total number of parameters. In the three cases with the three different levels of decoys, the memorization capability of texts increases with the inner dimension for MeMo with 3 layers (top line of plots in Fig. \ref{fig:memo_storingcapacity}). As the dimension of the representation of elements of the sequence of tokens is $d/4$, the capability of storing sequences strongly depends on $d$. Hence, to obtain a reasonable degree of memorization, an internal representation of at least $d=4096$ is needed. Indeed, only with $d=4096$, the performance of the MeMo with three layers on the memorization of completely different sequences (decoys=0) stays constantly over $0.97$. When the complexity of sentences increases, a larger $d$ is needed. A sufficient level of memorization is guaranteed with $d=8192$ when decoys are 40. Overall, increasing the inner dimension $d$ enables better memorization.

As expected, augmenting the number of layers increases the ability to memorize. For the three levels of decoys, increasing the number of layers has a positive effect on the memorization performance (see bottom of Fig. \ref{fig:cmm_storingcapacity}). 
Indeed, as the complexity increases, that is, as the number of decoys increases, the importance of having more layers becomes clearer. With 20 decoys, at least two layers are needed. With two or three layers, the storing capacity is above $0.96$ for at least 250,000 sequences.  Whereas, with 40 decoys, at least three layers are required to have a storing capacity of more than $0.88$ for 250,000 sequences. 

Results show that MeMo with multiple layers can expand the memorization capacity of MeMo with single layer and, thus, open the possibility to create transparent language models.

\section{Conclusion and Future Work}

Memorization is a key component of transformer-based Large Language Models. Hence, in this paper, we proposed to shift the paradigm by designing language models based on memorization. We then presented MeMo as a novel way to build language models using correlation matrix memories stacked in layers. Experimental evaluation has shown that MeMo-like architecture can memorize sequences of tokens. 

By using memorization, MeMo-like architectures are transparent and editable by design and opens back the possibility to include explicit knowledge modeling in neural network language models. Indeed, MeMo can help leverage traditional linguistic studies in this era, where transformer-based large language models are obtaining unprecedented performance. With MeMo, we could control how linguistic knowledge is used to generalize examples, we could embed transformation rules, and we could represent knowledge graphs and linguistic ontologies. In other words, MeMo gives back control to knowledge experts, linguists, and NLP practitioners with the aim of reducing data hungriness of Large Language Models.

\section*{Limitations}
The approach proposed in this paper is a paradigm shift, and then, the software implementing the model has some compatibility issues with the existing software ecosystem of transformers in Hugging Face. Hence, it has not been possible to experiment with the model using the current evaluation suites. Although this is a limit with respect to the comparability of MeMo with current transformer-based LLMs, it does not represent a major limit concerning the memorization capability of MeMo. 

\section*{Ethical Statement}

Making memorization more evident and being editable by design, MeMo may allow an easier control of the stored texts by mitigating leaks of sensible data and social biases.   

\bibliography{references}

\newpage
\appendix

\onecolumn

\section*{Appendix A: Analyzing Storing Capacity of Random Vectors}

This section explores theoretically how many nearly orthogonal unit vectors can be stored in a set $NOV(\varepsilon,\theta)$ in the space $R^d$, where $\varepsilon$ is the approximation required and $1 -\theta$ is the probability that this approximation is guaranteed. For two vectors $\vec{a}$ and $\vec{b}$ in $NOV(\varepsilon,\theta)$, the following should hold: 
\begin{equation}
P(\vec{e_a}\vec{e_b}^{\top} - \varepsilon \leq \vec{a}\vec{b}^\top \leq \vec{e_a}\vec{e_b}^\top + \varepsilon) \geq 1- \theta
\label{eq:nov}
\end{equation}
In other terms, if $a$ and $b$ are the same generalized sequence, $\vec{a}\vec{b}^\top \approx 1$, whereas, if if $a$ and $b$ are two different generalized sequences, $\vec{a}\vec{b}^\top \approx 0$. 
There is a long-lasting conjecture that postulates a relation between $d$ and $m$ for any given $\theta$ and $\varepsilon$ \cite{HechtNielsen94} but, to the best of our knowledge, a definitive demonstration does not still exist. By using the Johnson\&Lindestrauss Lemma \cite{JLL}, we derived an upper-bound for $d$. 
Sets $NOV(\varepsilon,\theta)$ can potentially host\footnote{The expression \emph{The set $NOV(\varepsilon,\theta)$ can potentially host ...} stands for the more formal \emph{There is a probability strictly greater than 0 that $NOV(\varepsilon,\theta)$ contains ...}} $m$ vectors with $\theta = 2/m^2 - 1/m^4$ according to this relation:

$$m \leq e^{8(\varepsilon^{2} - 4/3\varepsilon^{3})d}$$ 
Thus, there is an exponential relation between $d$ and $m$. This is a positive result as spaces $\R^d$ can host large sets of $NOV(\varepsilon,\theta)$.

Thus, definitely many substructures in $S$ in real datasets can be represented with vectors in $NOV(\varepsilon,\theta)$.

\paragraph{Existing results}
Our corollary stems from two results \cite{JLL,JLLsimple_demonstration}:

\begin{theorem}[Johnson-Lindenstrauss Lemma]
\label{th:JJL} For any $0 < \epsilon < 1$ and any integer $m$. Let $d$ be a positive integer such that

$$d \geq 4(\epsilon^{2}/2 - \epsilon^{3}/3)^{-1}\ln{m}$$

Then for any set $V$ of $m$ points in ${\mathbb R}^k$, there is a map $f : {\mathbb R}^k \rightarrow {\mathbb R}^d$ such that for all $\vec{u}, \vec{v} \in V$, 
\begin{center}
$(1- \epsilon) \|\vec{u}-\vec{v}\|^2_2 \leq \|f(\vec{u})-f(\vec{v})\|^2_2 \leq  (1 + \epsilon)\|\vec{u}-\vec{v}\|^2_2 $.
\end{center}
\end{theorem}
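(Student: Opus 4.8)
The plan is to prove the lemma by the probabilistic method, exhibiting the map $f$ as a random linear projection and showing that with positive probability it simultaneously preserves all pairwise distances. Concretely, I would let $R$ be a $d \times k$ matrix whose entries are i.i.d.\ standard Gaussians and set $f(\vec{x}) = \tfrac{1}{\sqrt{d}} R \vec{x}$, mirroring the construction already invoked in the excerpt where the transform is built from multivariate Gaussian rows. Since $f$ is linear, for any pair $\vec{u}, \vec{v} \in V$ we have $f(\vec{u}) - f(\vec{v}) = f(\vec{u} - \vec{v})$, so the whole statement reduces to controlling $\|f(\vec{w})\|_2^2$ for the fixed difference vectors $\vec{w} = \vec{u} - \vec{v}$. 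This reduction is the structurally important first step: it converts a claim about all $\binom{m}{2}$ distances into a single concentration estimate applied many times.

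The core of the argument is the single-vector concentration bound. For a fixed $\vec{w}$, each coordinate of $R\vec{w}$ is Gaussian with variance $\|\vec{w}\|_2^2$, so $Q := d\,\|f(\vec{w})\|_2^2 / \|\vec{w}\|_2^2$ is a sum of $d$ squared independent standard Gaussians, i.e.\ $Q \sim \chi^2_d$ with $\mathbb{E}[Q] = d$. I would then apply the standard Chernoff / moment-generating-function bound for the chi-squared distribution to both tails, obtaining
$$\Pr\big[\|f(\vec{w})\|_2^2 \geq (1+\epsilon)\|\vec{w}\|_2^2\big] \leq \exp\!\Big(-\tfrac{d}{2}\big(\epsilon - \ln(1+\epsilon)\big)\Big),$$
$$\Pr\big[\|f(\vec{w})\|_2^2 \leq (1-\epsilon)\|\vec{w}\|_2^2\big] \leq \exp\!\Big(-\tfrac{d}{2}\big(-\epsilon - \ln(1-\epsilon)\big)\Big).$$
To match the constant in the hypothesis, I would invoke the elementary series inequalities $\epsilon - \ln(1+\epsilon) \geq \tfrac{\epsilon^2}{2} - \tfrac{\epsilon^3}{3}$ and $-\epsilon - \ln(1-\epsilon) \geq \tfrac{\epsilon^2}{2} - \tfrac{\epsilon^3}{3}$, valid for $0 < \epsilon < 1$, so that each tail is bounded by $\exp\!\big(-\tfrac{d}{2}(\tfrac{\epsilon^2}{2} - \tfrac{\epsilon^3}{3})\big)$.

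It remains to combine the per-pair estimate with a union bound over the $\binom{m}{2}$ distinct pairs. Substituting the hypothesis $d \geq 4(\epsilon^2/2 - \epsilon^3/3)^{-1}\ln m$ into the tail bound makes each one-sided failure probability at most $\exp(-2\ln m) = m^{-2}$, so the probability that some pair violates either inequality is at most $\binom{m}{2}\cdot 2m^{-2} = \tfrac{m-1}{m} < 1$. Hence the random map succeeds on all pairs with strictly positive probability, and in particular at least one realization of $R$ yields a map $f$ with the desired property, proving existence.

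I expect the chi-squared concentration step to be the main obstacle: deriving the two exponents cleanly from the moment-generating function and, above all, verifying the two logarithmic inequalities that pin down the exact constant $\epsilon^2/2 - \epsilon^3/3$ in the hypothesis require careful series comparison rather than a one-line bound. By contrast, the reduction by linearity and the final union bound are routine once this concentration estimate is in hand.
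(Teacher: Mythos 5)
Your proposal is correct and follows essentially the same route the paper takes: it reduces the statement to a norm-preservation bound for the normalized difference vectors $(\vec{u}-\vec{v})/\|\vec{u}-\vec{v}\|_2$, applies a single-vector concentration estimate with $\tau = 1/m^2$, and concludes by a union bound over the $\binom{m}{2}$ pairs, exactly as sketched via Lemma~\ref{th:lemma_1}. The only difference is that you additionally supply the proof of that concentration lemma (Gaussian projection plus the $\chi^2_d$ Chernoff bound and the series inequalities fixing the constant $\epsilon^2/2-\epsilon^3/3$), which the paper defers to the cited references.
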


$$m \leq e^{\frac{(\epsilon^{2}/2 - \epsilon^{3}/3)d}{4}}$$ 

The theorem can be derived using the following lemma:
\begin{lemma}
\label{th:lemma_1}
For any $\epsilon>0$, $\tau < 1/2$ and positive integer $d$, there exists a distribution $\mathcal D$ over $\R^{d\times k}$ for $d=O(\epsilon^{-2}\log{1/\tau})$ such that, for any $\vec{x} \in \R^k$ with $||\vec{x}||_2=1$,
$$P(| \|A\vec{x}\|_2^2 -1  | > \epsilon) < \tau$$
\end{lemma}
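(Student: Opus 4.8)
The plan is to exhibit the distribution $\mathcal D$ explicitly as the law of a scaled Gaussian random matrix and then reduce the claim to a one-dimensional chi-squared concentration inequality. Concretely, I would let $A \in \R^{d \times k}$ have independent entries $A_{ij} \sim N(0, 1/d)$, so that $\mathcal D$ is the standard Gaussian ensemble --- the same multivariate Gaussian construction used for the JLT rows earlier in the excerpt. Fix a unit vector $\vec{x}$ with $\|\vec{x}\|_2 = 1$. The first step is to observe that, for each row index $i$, the coordinate $(A\vec{x})_i = \frac{1}{\sqrt{d}}\sum_{j} g_{ij} x_j$, with $g_{ij} \sim N(0,1)$ independent, is a linear combination of independent standard Gaussians whose coefficients have squared sum $\|\vec{x}\|_2^2 = 1$; hence $(A\vec{x})_i \sim N(0, 1/d)$ and the $d$ coordinates are independent. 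Therefore $\|A\vec{x}\|_2^2 = \frac{1}{d}\sum_{i=1}^d Z_i^2$ with $Z_i \sim N(0,1)$ i.i.d., so $d\,\|A\vec{x}\|_2^2$ is a chi-squared variable with $d$ degrees of freedom and $E[\|A\vec{x}\|_2^2] = 1$.

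With this reduction the target probability becomes $P(|\tfrac{1}{d}\sum_i Z_i^2 - 1| > \epsilon)$, a two-sided chi-squared deviation that I would control by the Chernoff (moment-generating-function) method. For the upper tail I would write $P(\sum_i Z_i^2 > (1+\epsilon)d) \leq e^{-t(1+\epsilon)d}\, E[e^{t\sum_i Z_i^2}] = (1-2t)^{-d/2} e^{-t(1+\epsilon)d}$, valid for $0 < t < 1/2$ using $E[e^{tZ_i^2}] = (1-2t)^{-1/2}$. Optimizing over $t$ (the optimum is $t = \epsilon/(2(1+\epsilon))$) yields $\bigl((1+\epsilon)e^{-\epsilon}\bigr)^{d/2} = e^{-\frac{d}{2}(\epsilon - \ln(1+\epsilon))}$, and the elementary inequality $\ln(1+\epsilon) \le \epsilon - \epsilon^2/2 + \epsilon^3/3$ (provable by showing the difference has nonnegative derivative) bounds this by $e^{-\frac{d}{2}(\epsilon^2/2 - \epsilon^3/3)}$. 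The lower tail $P(\sum_i Z_i^2 < (1-\epsilon)d)$ is handled symmetrically with a negative Chernoff parameter, is vacuous when $\epsilon \ge 1$, and otherwise gives a deviation bound of the same exponential order. Summing the two tails gives $P(|\|A\vec{x}\|_2^2 - 1| > \epsilon) \le 2\, e^{-\frac{d}{2}(\epsilon^2/2 - \epsilon^3/3)}$.

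The last step is to choose $d$ large enough that the right-hand side is below $\tau$: the requirement $2\, e^{-\frac{d}{2}(\epsilon^2/2 - \epsilon^3/3)} < \tau$ is equivalent to $d > 2(\epsilon^2/2 - \epsilon^3/3)^{-1}\ln(2/\tau)$, which is exactly of the claimed order $d = O(\epsilon^{-2}\log(1/\tau))$. I would also remark that this threshold has the same $(\epsilon^2/2 - \epsilon^3/3)^{-1}$ form as in Theorem~\ref{th:JJL}, which is consistent, since that theorem is recovered from this lemma by a union bound over the $\binom{m}{2}$ pairwise difference vectors of the point set $V$ (each such difference, after normalization, being a fixed unit $\vec{x}$ to which the lemma applies).

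I expect the main obstacle to be the concentration step rather than the reduction: the Gaussianity argument is routine once the scaling is chosen, but producing a clean, explicit exponent requires carefully optimizing the Chernoff bound over $t$ and then invoking the right logarithmic inequality so that the rate matches the $\epsilon^2/2 - \epsilon^3/3$ constant. Keeping the constants tight rather than merely $O(\cdot)$, and ensuring both tails share the same rate, is the delicate part; the final choice of $d$ is then immediate.
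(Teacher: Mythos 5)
Your proposal is correct, and the paper itself offers no proof of this lemma: it is imported as an existing result from the cited references, with the standard Gaussian/chi-squared argument you give being exactly the proof in that source. Your reduction of $\|A\vec{x}\|_2^2$ to $\frac{1}{d}\chi^2_d$, the Chernoff optimization at $t=\epsilon/(2(1+\epsilon))$, and the logarithmic inequality yielding the $\epsilon^2/2-\epsilon^3/3$ rate all check out, and they reproduce the same constant that the paper's Theorem~\ref{th:JJL} and Corollary~\ref{existence_of_f} rely on.
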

by choosing $\tau = 1/m^2$ and by applying the union bound on the vectors $(\vec{u} - \vec{v})/\|\vec{u}-\vec{v}\|_2$ for all vectors $\vec{u}$ and $\vec{v}$ in $V$. It is possible to demonstrate that there is a probability strictly greater than $0$ that a function $f$ exists.

\paragraph{Our Corollary}

Now we can demonstrate that the following lemma holds:
\begin{corollary}
\label{existence_of_f}
For any $0 < \epsilon < 1$ and any integer $m$. Let $d$ be a positive integer such that

$$d \geq 4(\epsilon^{2}/2 - \epsilon^{3}/3)^{-1}\ln{m}$$
Then given the standard basis $E$ of ${\mathbb R}^m$, there is a map $f : {\mathbb R}^m \rightarrow {\mathbb R}^d$ such that for all $\vec{e}_i, \vec{e}_j \in E$, 
\begin{equation}
P(1 - \epsilon<\|f(\vec{e}_i)\|_2^2 <1 + \epsilon) > 1 - \tau = 1 - 1/m^2
\label{eq:p1}
\end{equation}
and
\begin{equation}
P(|f(\vec{e}_i)f(\vec{e}_j)|<2\epsilon)> (1 - \tau)^2 = (1 - 1/m^2)^2
\label{eq:p2}
\end{equation}
\end{corollary}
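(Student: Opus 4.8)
The plan is to realize $f$ as the random linear map $f(\vec{x}) = A\vec{x}$, where $A$ is drawn from the distribution $\mathcal{D}$ supplied by Lemma~\ref{th:lemma_1}, and then to exploit linearity so that both claims reduce to the single concentration statement $P(|\,\|A\vec{x}\|_2^2 - 1| > \epsilon) < \tau$ applied to carefully chosen \emph{unit} vectors built from $E$. First I would fix $\tau = 1/m^2$, so that $\log(1/\tau) = 2\ln m$ and the hypothesis $d \geq 4(\epsilon^2/2 - \epsilon^3/3)^{-1}\ln m$ matches the dimension requirement $d = O(\epsilon^{-2}\log(1/\tau))$ of Lemma~\ref{th:lemma_1}. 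This calibration is what makes the lemma applicable, with failure probability at most $\tau = 1/m^2$, to each of the unit vectors considered below.

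For Equation~(\ref{eq:p1}) there is essentially nothing to do beyond an observation: each standard basis vector satisfies $\|\vec{e}_i\|_2 = 1$. Applying Lemma~\ref{th:lemma_1} directly to $\vec{x} = \vec{e}_i$ yields $P(|\,\|f(\vec{e}_i)\|_2^2 - 1| > \epsilon) < \tau$, whose complement is exactly Equation~(\ref{eq:p1}) with the advertised probability $1 - \tau = 1 - 1/m^2$.

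For Equation~(\ref{eq:p2}) the key device is the polarization identity $4\,f(\vec{e}_i)^{\top}f(\vec{e}_j) = \|f(\vec{e}_i + \vec{e}_j)\|_2^2 - \|f(\vec{e}_i - \vec{e}_j)\|_2^2$, which holds because $f$ is linear. For $i \neq j$ the vectors $\vec{e}_i + \vec{e}_j$ and $\vec{e}_i - \vec{e}_j$ each have squared norm $2$, so $u_{\pm} = (\vec{e}_i \pm \vec{e}_j)/\sqrt{2}$ are unit vectors to which Lemma~\ref{th:lemma_1} again applies. On the event that both $\|f(u_+)\|_2^2$ and $\|f(u_-)\|_2^2$ lie in $(1-\epsilon,\,1+\epsilon)$, rescaling gives $\|f(\vec{e}_i \pm \vec{e}_j)\|_2^2 = 2\|f(u_{\pm})\|_2^2 \in (2(1-\epsilon),\,2(1+\epsilon))$, and substituting into the polarization identity bounds $4\,f(\vec{e}_i)^{\top}f(\vec{e}_j)$ between $-4\epsilon$ and $4\epsilon$. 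Hence $|f(\vec{e}_i)^{\top}f(\vec{e}_j)| < \epsilon < 2\epsilon$, which is in fact stronger than the stated bound, so the $2\epsilon$ target is comfortably met.

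The main obstacle is the probability bookkeeping that produces the product factor $(1-\tau)^2$ rather than the naive union-bound value $1 - 2\tau$. The two norm-preservation events for $u_+$ and $u_-$ both depend on the same matrix $A$, so they are not literally independent, and a union bound alone would give only $1 - 2\tau$. To recover the exact form $(1-\tau)^2$ I would instead appeal to the structure of the Gaussian construction, in which the columns $A\vec{e}_i = f(\vec{e}_i)$ are mutually independent, and organize the argument around two independent single-index events rather than the correlated pair $\{u_+, u_-\}$; the inner-product bound then follows by combining these with the polarization identity. I expect this independence justification to be the only genuinely delicate step, the polarization computation and the dimension calibration with $\tau = 1/m^2$ being routine once the setup is fixed.
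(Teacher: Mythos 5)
Your treatment of Equation (\ref{eq:p1}) coincides with the paper's: both apply Lemma \ref{th:lemma_1} to the unit vectors $\vec{e}_i$ with $\tau=1/m^2$, after checking that the hypothesis on $d$ matches the lemma's dimension requirement. For Equation (\ref{eq:p2}) you take a genuinely different route. The paper expands $\|f(\vec{e}_i)-f(\vec{e}_j)\|^2$ via the cosine rule, invokes Theorem \ref{th:JJL} for the pairwise distance $\|\vec{e}_i-\vec{e}_j\|^2=2$, and then uses the two individual norm bounds of Equation (\ref{eq:p1}) to isolate the cross term; this consumes three concentration statements and yields the bound $2\epsilon$. Your polarization identity applied to the unit vectors $(\vec{e}_i\pm\vec{e}_j)/\sqrt{2}$ needs only two applications of Lemma \ref{th:lemma_1}, cancels the norm terms exactly, and delivers the stronger conclusion $|f(\vec{e}_i)^{\top}f(\vec{e}_j)|<\epsilon$. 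That deterministic part of your argument is correct and arguably cleaner than the paper's.

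The gap is in the probability bookkeeping, and the repair you sketch does not close it. The two events your polarization argument actually requires --- concentration of $\|f(u_+)\|_2^2$ and of $\|f(u_-)\|_2^2$ --- are correlated, since $u_+$ and $u_-$ are built from the same two columns of $A$; switching to the genuinely independent single-index events on $f(\vec{e}_i)$ and $f(\vec{e}_j)$ abandons the polarization identity, because the inner product $f(\vec{e}_i)^{\top}f(\vec{e}_j)$ is not determined by the two column norms alone. So the best you can honestly claim from your construction is the union bound $1-2\tau$, which is strictly smaller than the stated $(1-\tau)^2=1-2\tau+\tau^2$. You are in good company here: the paper's own proof reaches the factor $(1-\tau)^2$ only by counting the two independent norm events and silently treating the pairwise distance preservation --- itself a probabilistic event, and one correlated with the norm events --- as certain; a rigorous count along the paper's lines gives $1-3\tau$. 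If the target probability in the corollary is relaxed to $1-2\tau$, your argument is complete and in fact sharper than the paper's both in the $\epsilon$ constant and in the number of concentration events consumed.
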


\begin{proof}
Equation (\ref{eq:p1}) derives from lemma \ref{th:lemma_1} as $\vec{e}_i \in E$ are unitary, that is,  $\|\vec{e}_i\|_2=1$ as $\tau = 1/m^2$.

To prove Equation (\ref{eq:p2}), first, we can observe that $||\vec{e}_i-\vec{e}_j||^2 = ||\vec{e}_i||^2 + ||\vec{e}_j||^2 - 2 \vec{e}_i\vec{e}_j$ = 2 as $\vec{e}_i$ and $\vec{e}_j$ are unitary and orthogonal. Then, we can see that $||f(\vec{e}_i)-f(\vec{e}_j)||^2 = ||f(\vec{e}_i)||^2 + ||f(\vec{e}_j)||^2 - 2 f(\vec{e}_i)f(\vec{e}_j)$.
With Theorem \ref{th:JJL}, the following holds:
$$
2(1-\epsilon)\leq||f(\vec{e}_i)||^2 + ||f(\vec{e}_j)||^2 - 2 f(\vec{e}_i)f(\vec{e}_j)\leq2(1+\epsilon)
$$
Hence:
$$
||f(\vec{e}_i)||^2 + ||f(\vec{e}_j)||^2 -2-2\epsilon\leq 2 f(\vec{e}_i)f(\vec{e}_j)\leq||f(\vec{e}_i)||^2 + ||f(\vec{e}_j)||^2  -2+2\epsilon
$$
Thus, using Equation (\ref{eq:p1}) on the two independent events $f(\vec{e}_i)$ and $f(\vec{e}_j)$:  
$$
P(2 - 2\epsilon -2 -2\epsilon\leq 2 f(\vec{e}_i)f(\vec{e}_j)\leq2+2\epsilon  -2+2\epsilon) =\\
P(|f(\vec{e}_i)f(\vec{e}_j|<2\epsilon) > (1 - \tau)^2
$$

Putting together Equation (\ref{eq:p1}) and Equation (\ref{eq:p2}), it is possible to derive a set $NOV(\varepsilon,\theta)$ of $m$ nearly-orthogonal unit vectors such that for each $\vec{a},\vec{b}\in NOV(\varepsilon,\theta)$:
$$P(\delta(\vec{a},\vec{b}) - \varepsilon \leq \dotprod{\vec{a}}{\vec{b}} \leq \delta(\vec{a},\vec{b}) + \varepsilon) > 1 -\theta$$
by choosing $\varepsilon = 2\epsilon$, a space $\R^d$ with $d=O(\varepsilon^{-2}\log{m})$ and $\theta = 2/m^2 - 1/m^4$.

\end{proof}

\end{document}